\newcolumntype{P}[1]{>{\centering\arraybackslash}p{#1}}
\newcommand{\N}{\mathcal{N}}
\newcommand{\K}{\mathcal{K}}
\newcommand{\E}{\mathbb{E}}
\newcommand{\SG}{\textup{SubGaussian}}
\newcommand{\argmax}{\textup{argmax}}
\newcommand{\sumlimits}{\sum_{t=1}^T}
\newtheorem{theorem}{Theorem}
\newtheorem{remark}{Remark}
\newtheorem{definition}{Definition}
\titlespacing*{\section}{0pt}{0.3\baselineskip}{0.2\baselineskip}
\begin{document}
\title{Explore-then-Commit Algorithms \\ for Decentralized two-sided Matching Markets 
} 


\author{%
  \IEEEauthorblockN{Tejas Pagare, IIT Bombay}
  \and
  \IEEEauthorblockN{Avishek Ghosh, IIT Bombay}
}


\maketitle


\begin{abstract}
Online learning in a decentralized two-sided matching markets, where the demand-side (players) compete to match with the supply-side (arms), has received substantial interest because it abstracts out the complex interactions in matching platforms (e.g. UpWork, TaskRabbit). However, past works \cite{liu2020competing,liu2021bandit,ucbd3,basu2021beyond,SODA} assume that the each arm knows their preference ranking over the players (one-sided learning), and each player aim to learn the preference over arms through successive interactions. Moreover, several (impractical) assumptions on the problem are usually made for theoretical tractability such as broadcast player-arm match (\cite{liu2020competing,liu2021bandit,SODA}) or serial dictatorship (\cite{ucbd3,basu2021beyond,ghosh2022decentralized}). In this paper, we study a decentralized two-sided matching market, where we do not assume that the preference ranking over players are known to the arms apriori. Furthermore, we do not have any structural assumptions on the problem. We propose a  multi-phase explore-then-commit type algorithm namely epoch-based CA-ETC (collision avoidance explore then commit) (\texttt{CA-ETC} in short) for this problem that does not require any communication across agents (players and arms) and hence decentralized. We show that for the initial epoch length of $T_{\circ}$ and subsequent epoch-lengths of $2^{l/\gamma} T_{\circ}$ (for the $l-$th epoch with $\gamma \in (0,1)$ as an input parameter to the algorithm), \texttt{CA-ETC} yields a player optimal expected regret of $\mathcal{O}[T_{\circ} (\frac{K \log T}{T_{\circ} \Delta^2})^{1/\gamma} + T_{\circ} (\frac{T}{T_{\circ}})^\gamma]$ for the $i$-th player, where $T$ is the learning horizon, $K$ is the number of arms and $\Delta$ is an appropriately defined problem gap. 
Furthermore, we propose a blackboard communication based baseline achieving logarithmic regret in $T$. 
\end{abstract}

\section{Introduction}
\label{sec:intro}
Online matching markets (e.g. Mechanical Turk, Upwork, Uber, Labour markets, Restaurant) are economic platforms that connect demand side, (e.g. businesses in Mechanical Turk or Upwork, customers wanting Uber ride or restaurant reservations), to the supply side (e.g. freelancers in Upwork, or crowdworkers in Mechanical Turk, drivers in Uber, restaurant availability) (\cite{johari2021matching,das_dating}). In these platforms, the demand side (also known as player side) makes repeated decisions to obtain (match) the resources in the supply side (also known as arms side) according to their preference. The supply side is usually resource constrained, and hence it is possible that more than one player compete for a particular resource. Given multiple offers, the supply side arm chooses the agent of its choice. This agent is given the non-zero random reward, while all other players participated in the collision gets a deterministic zero reward. 
Hence, in this framework, the players need to simultaneously compete as well as estimate the uncertainty in the quality of resource. 

Usually the interaction between the supply and demand side is modelled as a bipartite graph, with players and arms in both sides having a preference (or ranking) over the other side, which is unknown apriori.  Each agents’ task is to learn this preference through successive but minimal interaction between the sides and thereafter obtain an optimal stable matching between the demand and the supply side.

Multi-Armed Bandits (MAB) is a popular framework that balances exploration and exploitation while navigating uncertainty in the system \cite{lattimore2020bandit,auer2002finite}. 
Learning in matching markets has received considerable interest in the recent past, especially from the lens of a multi-agent MAB framework \cite{liu2021bandit,ucbd3,SODA,basu2021beyond}. In this formulation, the demand side corresponds to multiple players and the supply side resources correspond to multiple arms. The additional complexity here is the presence of competition among players. The objective of this problem is to learn the preference ranking \emph{for both the players and arms simultaneously} through successive interactions. Once the preferences are learnt, matching algorithm like Gale-Shapley (Deferred Acceptance \cite{gale1962college}) may be used to obtain the optimal stable matching.

Although the MAB framework in markets and bandits have been popular in the recent past, starting with a centralized matching markets (i.e., there exists a centralized authority who helps in the matching process, \cite{liu2020competing,jagadeesan2021learning}) to a more recent decentralized framework \cite{lattimore2020bandit,ucbd3,basu2021beyond,SODA,maheshwari2022decentralized}, \emph{all} the previous works have several strong (and often impractical) assumptions on the market for theoretical tractability. We now discuss them in detail. 

\textbf{One-sided learning:} One of the main assumptions made in all the previous works \cite{liu2020competing,liu2021bandit,ucbd3,basu2021beyond,maheshwari2022decentralized,ghosh2022decentralized}, including the state-of-the-art \cite{SODA} is that of \emph{one-sided learning}. It is assumed that the arms know their preferences over players apriori before the start of the learning, and hence the crux of the problem is to learn the preferences for the players only. So, the problem of two sided learning reduces to one sided learning. Here, algorithms based on Upper-Confidence-Bound \cite{ucbd3,liu2021bandit} as well as Explore-Then-Commit \cite{SODA,basu2021beyond} are both analyzed and sublinear regret guarantees are obtained. With the knowledge of the arms' ranking apriori, the system can correctly resolve the conflicts every time a collision occurs, which is used crucially in the analysis of such algorithms.

\textbf{Additional structural market assumptions:} Apart from the one-sided learning assumption, several additional structural assumptions on markets are usually being made to obtain sub-linear regret. In \cite{liu2021bandit,SODA,ghosh2022decentralized}, it is assumed that when a player successfully receives a reward (or in other words, a player and an arm are successfully matched), the matched pair becomes a common knowledge to all the players, we refer this as (player, arm) broadcast. This assumption allows agents to communicate their learning progress. Moreover, \cite{ucbd3,ghosh2022decentralized} assumed a \emph{serial dictatorship} model where the preference ranking of players are assumed to be same for all the arms. In \cite{liu2021bandit}, the same assumption is termed as \emph{global ranking}. In \cite{maheshwari2022decentralized}, this assumption is weakened and termed as \emph{$\alpha$-reducible} condition. Moreover, in \cite{basu2021beyond}, the authors propose a \emph{uniqueness consistency} assumption on the market, implying that the leaving participants do not alter the original stable matching of the problem.
\subsection{Summary of Contributions}
We study two learning algorithms, (i) Explore-then-Gale-Shapley (ETGS) in the presence of a blackboard, a communication medium, which we define shortly and (ii) epoch-based collision avoidance explore-then-commit \texttt{CA-ETC} a decentralized algorithm, both of which work for any \emph{two-sided} market with \emph{no restrictive assumptions}. By decentralized algorithm we mean that each agent performs actions based only on their past interaction in the market without any communication with other agents. Following we briefly explain our contributions 
\paragraph{Decentralized two-sided learning algorithm:} We take a step towards obtaining a provable two-sided learning algorithm for decentralized matching markets. To be concrete, we do not assume that the preference of the players are known to the arms apriori. Our proposed schemes, \texttt{ETGS} (with blackboard) and \texttt{CA-ETC} are based on Explore Then Commit (ETC) algorithm and are also \textit{collision free}, similar to \cite{SODA}. However, the algorithms are able to learn the preferences for both the agent and the arm side simultaneously through obtained samples in the exploration phase. Further, \texttt{CA-ETC} is decentralized. Note that in real world applications, like two-sided labor markets (Upwork, Uber, Restaurant), crowd-sourcing platforms (Mechanical Turk), scheduling jobs
to servers  (AWS, Azure) \cite{dickerson2019online,even2009scheduling}, the preference of players side are apriori unknown and a two-sided learning algorithm is necessary. 


We would like to point out a few very recent works on two-sided learning. \cite{jagadeesan2021learning} proposes a centralized algorithm for a market framework that allows utility transfer (monetary transfers), with convergence to a weaker notion of stability. \cite{pokharel2023converging} introduces algorithms with (player, arm) broadcasting and does not provide any regret guarantees. \cite{anonymous2024bandit} assumes the existence of arm-sample efficient strategy which allows them to reduce their problem to one-sided learning after finitely many round-robin exploration rounds. 

\begin{table*}
\small
\centering
\begin{tabular}{P{2.8cm}|P{1.2cm}|P{4.5cm}|P{2.5cm}|P{4.7cm}} 
  & Two-sided  & Assumption & Regret Type & Regret  \\ 
 \hline

  \cite{liu2020competing} & No
  & centralized & 
player-pessimal & ${\small  \mathcal{O}(N K^3 \log T/ \Delta^2)}$\\[0.2em]

  \cite{liu2021bandit} & No &  (player, arm) broadcast & player-pessimal & ${\mathcal{O}\left(\frac{N^5K^2 \log^2 T}{\epsilon^{N^4}\Delta^2}\right)}$ \\[0.6em]
  
   \cite{ucbd3} & No & serial dictatorship & player-optimal & ${\small \mathcal{O}\left({NK \log T}/{\Delta^2}\right) }$ \\

   \cite{basu2021beyond} & No & uniqueness consistency & player-optimal & ${\small \mathcal{O}\left({NK \log T}/{\Delta^2}\right) }$ \\

   \cite{maheshwari2022decentralized} & No & $\alpha$-reducible & player-optimal & ${\small \mathcal{O}\left({\mathcal{C} NK \log T}/{\Delta^2}\right)}$ \\

   \cite{SODA} & No & (player, arm) broadcast & player-optimal & ${\small \mathcal{O}\left({ K \log T}/{\Delta^2}\right)}$ \\
   
   \cite{pokharel2023converging} & Yes & (player, arm) broadcast & -- & No Guarantee \\
   
    This paper (Blackboard) & Yes & blackboard (eqv. to (play, arm) broadcast) & player-optimal & ${\small \mathcal{O}\left({ K \log T}/{\Delta^2}\right)}$\\
    
    This paper (CA-ETC)  & Yes & {no assumptions }& player-optimal & ${\mathcal{O}\Big(T_{\circ} \left(\frac{K \log T}{T_{\circ} \Delta^2}\right)^{1/\gamma} + T_{\circ} \left(\frac{T}{T_{\circ}}\right)^\gamma\Big)}$ 
\end{tabular}

\caption{\small Table comparing the regret bound of \texttt{CA-ETC} with existing works. Here, $N$ is the number of players, $K$ is the number of arms, $T$ is the learning horizon, $\Delta$ is the minimum gap (to be defined later). Also, $\epsilon$ (in \cite{liu2021bandit}) and $\mathcal{C}$ (in \cite{maheshwari2022decentralized} are problem dependent hyper-parameters. For our algorithm, \texttt{CA-ETC}, $T_{\circ}$ is the initial epoch length and $\gamma \in (0,1)$ is an input parameter related to the subsequent epoch lengths.}
\label{table:comparison}
\end{table*}

\paragraph{No structural assumptions on markets:} We emphasize that our collision-free algorithms do not require any additional assumption on the economic market, like serial dictatorship, global knowledge of matching player arm pair, unique consistency etc, which makes them  more practical. In applications like labor markets (Upwork, Taskrabbit) \cite{massoulie2016capacity}, crowd-sourcing platforms (Mechanical Turk), scheduling jobs
to servers in an online marketplace (AWS, Azure) \cite{dickerson2019online,even2009scheduling}, question answering platform (Quora, Stack Overflow) \cite{shah2020adaptive} the structural assumptions mentioned above are naturally not satisfied. So, there is a gap between theory and practice, and our proposed algorithms, is a natural first step towards closing this gap.


\section{Problem Setting}
\label{sec:setup}
We now explain the problem formulation. Consider a market with $N$ players and $K$ arms with $N\leq K$ w.l.o.g. (we comment on $N>K$ case in the Appendix). Denote $\N=\{p_1,p_2,\ldots,p_N\}$ as the set of players and $\K=\{a_1,a_2,\ldots,a_K\}$ as the set of arms. In general, not all players and arms will participate in the market, but we consider that the participating agents include all the players and arms. At time step $t$, each player $p_i$ attempts to pull an arm $A_i(t)\in\K$. When multiple players pull the same arm, only one player will successfully pull the arm based on arm's preferences which are also learned over time. 

\paragraph*{Two-sided reward model:} Since our goal is to learn the preferences of the players and arms simultaneously, we propose a two-sided reward model in the following way. If player $p_i$ successfully pulls an arm $A_i(t)=a_j$ then $p_i$ receives a stochastic reward $X^{(i)}_{j}(t)\sim \SG(\mu_{j}^{(i)})$ (subGaussian with mean $\mu_{j}^{(i)}$ and unity subGaussian parameter), and arm $a_j$ receives a stochastic reward $Y^{(j)}_{i}(t)\sim \SG(\eta_{i}^{(j)})$. We remark that for two sided learning, reward information for both the player and the arm side are necessary, and hence we propose this two sided reward model. If $\mu_j^{(i)}>\mu_{j'}^{(i)}$, we say that player $p_i$ truly prefers arm $a_j$ over $a_{j'}$. 
Similarly, if $\eta_i^{(j)}>\eta_{i'}^{(j)}$, we say that arm $a_j$ truly prefers player $p_i$ over $p_{i'}$. 
We denote $\mathcal{P}_j(t):=\{p_i:A_i(t)=a_j\}$ as the set of players proposing arm $a_j$,  $\bar{A}_i(t)$ as the successfully matched arm of player $p_i$, $\bar{P}_j(t)$ as the successfully matched player of arm $a_j$ i.e. $\bar{P}_j(t)\in \argmax_{p_i\in\mathcal{P}_j(t)}\eta_i^{(j)}$. 
Then $\bar{A}_i(t)=A_i(t)$ when $p_i$ is successfully accepted by arm $A_i(t)$ else if rejected $\bar{A}_i(t)=\emptyset$. When two or more players propose an arm $a_j$ then only the most preferred player $\bar{P}_j(t)$ among $\mathcal{P}_j(t)$ gets an reward $X^{(\bar{P}_j(t))}_{j}(t)$ and other get a zero reward \footnote{In our setting, an arm learns only when it is proposed. Thus, we explicitly do not define reward feedback to the arm if it is unproposed.}. Denote $m_t$ as the final matching at round $t$ as a function from $\mathcal{N}$ to $\mathcal{K}$ s.t player $p_i=m_t^{-1}(\bar{A}_i(t))$ is matched with arm $m_t(p_i)$. We assume that all the participating agents have strict preference ranking i.e. $\mu_j^{(i)}\neq \mu_{j'}^{(i)}$ and $\eta_i^{(j)}\neq \eta_{i'}^{(j)}$ for all arms $a_j\neq a_{j'}$ and players $p_i\neq p_{i'}$.
\paragraph{Stable matching:}
We seek to find stable matching \cite{gale1962college}, which is shown to be closely related to the notion of Nash equilibrium in game theory. A market matching is stable if no pair of players and arms would prefer to be matched with each other over their respective matches. Formally, $m_t$ is stable if there exists no player-arm pair $(p_i,a_j)$ such that $\mu^{(i)}_{j}>\mu^{(i)}_{m_t(p_i)}$ and $\eta^{(j)}_{i}>\eta^{(j)}_{m_t^{-1}(a_j)}$, where we simply define $\mu^{(i)}_{\emptyset}=-\infty$ and $\eta^{(j)}_{\emptyset}=-\infty$ for each $i\in[N],j\in[K]$. Let $\mathcal{M}:=\{m:m$ is a stable matching$\}$ be the set of all stable matching. Define player-optimal stable matching $\bar{m}_p\in \mathcal{M}$ as the players' most preferred match i.e. $\mu^{(i)}_{\bar{m}_p(p_i)}\geq \mu^{(i)}_{m(p_i)}$ for any match $m\in \mathcal{M}$ and for all $i\in[N]$. One can similarly define arm-optimal stable matching  $\bar{m}_a\in \mathcal{M}$ as the arms' most preferred match i.e. $\eta^{(j)}_{\bar{m}_a^{-1}(a_j)}\geq \eta^{(j)}_{{m}_a^{-1}(a_j)}$ for any $m\in \mathcal{M}$ and for all $j\in[K]$. Previous works \cite{SODA} has also defined the notion of player-pessimal stable matching defined as  the players' least preferred match $\underline{m}_p\in \mathcal{M}$  i.e. $\mu^{(i)}_{\underline{m}_p(p_i)}\leq \mu^{(i)}_{m(p_i)}$ for any $m\in \mathcal{M}$ and for all $i\in[N]$. Similarly we define arm-pessimal stable matching  $\underline{m}_a\in M$ as the arms' least  preferred match i.e. $\eta^{(j)}_{\underline{m}_a^{-1}(a_j)}\leq \eta^{(j)}_{{m}_a^{-1}(a_j)}$ for any $m\in \mathcal{M}$ and for all $j\in[K]$. 
\paragraph{Regret:}
Based on the different notions of stable matching, we define player-optimal regret for each player $p_i$, $i\in[N]$ over $T$ rounds as
\begin{align*}
 \overline{RP}_i(t) &= \sumlimits \mu^{(i)}_{\bar{m}_p(p_i)} - \E[\sumlimits X^{(i)}_j(t)] 
    \label{eq:preg}
\end{align*}
and arm-pessimal regret for each arm $a_j$, $j\in[K]$ as follows
\begin{align*}
\underline{RA}_j(t) &= \sumlimits \eta^{(j)}_{\underline{m}_a(a_j)} - \E[\sumlimits Y^{(j)}_i(t)].
\end{align*}
One can similarly define player-pessimal and arm-optimal regret. Note that the player-pessimal (arm-pessimal) regret is upper bounded by player-optimal (arm-optimal) regret, and hence any upper bound on player-optimal regret automatically serves as an upper bound on player pessimal regret. When there are more than one stable matching i.e. $|M|>1$, the difference between player-pessimal and player-optimal regret can be $\mathcal{O}(T)$ due to a constant difference between $\mu^{(i)}_{\underline{m}_p}(p_i)$ and  $\mu^{(i)}_{\bar{m}_p(p_i)}$, similarly for arms. Throughout we give guarantees for player-optimal and arm-pessimal regret which are equal since a player-optimal match is same as the arm-pessimal match and since both players and arms are using same algorithm. We will henceforth refer to agents as players and arms i.e. all the participating agents in the market.
\section{Algorithms for Two-sided matching markets}
\label{sec:algo}
\noindent \textit{Gap}: We define the gap of player $p_i$ as $\Delta^{(i)} = \min_{j\neq j'} |\mu^{(i)}_{j'}- \mu^{(i)}_{j} |$ and gap of arm $a_j$ as $\Delta'^{(j)}=\min_{i\neq i'} |\eta^{(j)}_{i'}- \eta^{(j)}_{i} |$. The universal gap is defined as the minimum of all the player and arm gap i.e. $\Delta = \min_{i\in[N], j\in [K]}\{\Delta^{(i)},\Delta'^{(j)}\}$.

We present learning algorithms in this section. Before this we first note that, when the gap $\Delta$ is made common knowledge to every agent, then a logarithmic regret can be obtained, as the gap knowledge suffices to estimate the high-probability upper bound on the sufficient number of exploration round. We hypothesize that such direct estimation without any communication is not possible. Thus we start with a simple setup, where the participating agents are allowed to communicate using a blackboard then we present our main algorithm, \texttt{CA-ETC}.

\subsection{Warmup: Learning with Blackboard}
The blackboard model of
communication is standard in centralized multi-agent systems, with applications in distributed optimization, game theory,
 and auctions \cite{awerbuch2008competitive,7218651,agarwal2009information}. We show that this is
equivalent to broadcasting in the centralized framework. 

\begin{definition}
    A blackboard $\mathcal{B}$ is a Boolean array of size $N+K$ where for $i\in \{1,\ldots,N\}$, $i^{\textup{th}}$ entry is changed only by player $p_i$ and for $j\in \{1,\ldots,K\}$, $(N+j)^{\textup{th}}$ entry is changed only by arm $a_j$. The blackboard can be viewed by all the players and arms.
\end{definition}
 The Algorithm \ref{algo:blackboard} \texttt{ETGS}, is Explore-then-Commit type where each agent performs exploration in a round-robin manner (thus avoiding collision) till everyone finds correct preference ranking which is communicated to other agents by blackboard and then everyone commits to the stable-matching Gale-Shapley algorithm.

 We now explain the stages in detail.
 
$\circ$ \textit{Exploration Phase}:
    We first emphasize that when arms are unaware of the preferences over players, they are unable to resolve conflicts between colliding players. This necessitates a round-robin exploration, where at every round, each player proposes to a distinct arm and hence avoids the collision. To perform round-robin exploration, an index estimation scheme is proposed which assigns each player in a \textit{decentralized} manner a distinct index.

    $\diamond$ \textit{Index estimation}: In the proposed Algorithm \ref{algo:index}, each player proposes to arm $a_1$ till it gets accepted and assigns itself the index as the round of acceptance. Thus, this phase lasts for $N$ number of rounds. Intuitively, the index of each player will be the player's preference rank for arm $a_1$. In the beginning arm $a_1$ will have arbitrary but distinct preference ranking over players, hence leading to a distinct index for each player.

\begin{algorithm}[t!]
\caption{Index Estimation (view of player $p_i$)}
\label{algo:index}
\SetKwInOut{Input}{Input}
\SetKwInOut{Output}{Output}
\Input{arbitrary preference ranking of arm $a_1$ over players}
    \For{\textup{round} t=1,2,\ldots, N}{
    $A_i(t)= a_1$ 
    
    \If{$\bar{A}_i(t)=A_i(t)=a_1$}{
        Index = $t$ and break for loop
    }
    }
\end{algorithm}

\begin{algorithm}[t!]
    \caption{ ETGS (player $p_i$) (Blackboard)}
    \label{algo:blackboard}
    \SetKwInOut{Input}{Input}
    \SetKwInOut{Output}{Output}
    Perform Index estimation (Algorithm \ref{algo:index}), $t=1$\\
        \While{$\sum_{l=1}^{N+K}\mathcal{B}[l] \neq N+K$}{
        $A_i(t)=a_{(\textup{Index}+t-1)\%K+1}$ \tcp{exploration}
        \textup{Observe} $X^{(i)}_{A_i(t)}(t)$ \textup{and update} $\hat{\mu}^{(i)}_{A_i(t)},T^{(i)}_{A_i(t)}$ \textup{if} $\Bar{A}_i(t)=A_i(t)$\\
        \textup{Compute \texttt{UCB}}$^{(i)}_{k}(t)$ \textup{and \texttt{LCB}}$^{(i)}_{k}(t)$ \textup{for each} $k\in[K]$\\
        \If{$\exists\sigma\ \textup{such that \texttt{LCB}}^{(i)}_{\sigma_{k}}(t)> \textup{\texttt{UCB}}^{(i)}_{\sigma_{k+1}}(t)$ \textup{for any} $k\in[K-1]$}{
        \textup{Preferences = }$\sigma$ and $\mathcal{B}[i]=1$}
        $t\leftarrow t+1$}
    \tcp{Gale-Shapley with $\sigma=
    (\sigma_{1},\ldots,\sigma_{K})$}
    Propose using $\sigma$ till acceptance
    \\
    Initialize $s=1$\\
    \While{$t\neq T$}{
    $A_i(t)=a_{\sigma_{s}}$\\
    $s=s+1$ if $\Bar{A}_i(t)==\emptyset$\\
    $t\leftarrow t+1$
    }
\end{algorithm}
\noindent Based on the index, each player now performs round-robin exploration and each arm accepts the proposal. After every exploration round, each player $p_i$  computes \texttt{UCB}$^{(i)}_k$ and \texttt{LCB}$^{(i)}_k$  for every arm $a_k, k\in[K]$ and similarly arms that got matched (i.e. those who got proposed) computes the confidence bounds for all players. From the view of player $p_i$, after observing the reward from the matched arm $A_i(t)$, it updates an estimate of mean reward $\hat{\mu}^{(i)}_{A_i(t)}$ and the observed time $T^{(i)}_{A_i(t)}$ which is defined as the number of times player $p_i$ is matched with arm $A_i(t)$ initialized as $T^{(i)}_{A_i(0)}=0$. At time $t$ this is updated using

\begin{equation*}
    \hat{\mu}^{(i)}_{A_i(t)} = \text{$\frac{\hat{\mu}^{(i)}_{A_i(t)}T^{(i)}_{A_i(t)}+X^{(i)}_{A_i(t)}}{T^{(i)}_{A_i(t)}+1}$}, \ \ \ \ T^{(i)}_{A_i(t)} = T^{(i)}_{A_i(t)}+1
\end{equation*}

Now every player $p_i$ updates the $\texttt{UCB}$ and $\texttt{LCB}$ estimates of $\mu^{(i)}_{k}$ for each arm $a_k$ as $\texttt{UCB}^{(i)}_{k}(t) = \mu^{(i)}_{k}+ \sqrt{\frac{2\log t}{T^{(i)}_{k}}}$ and $ \texttt{LCB}^{(i)}_{k}(t)=\mu^{(i)}_{k}-\sqrt{\frac{2\log t}{T^{(i)}_{k}}}
$, where $\texttt{UCB}^{(i)}_{k}(t)$ and $\texttt{LCB}^{(i)}_{k}(t)$ are initialized as $\infty$ and $-\infty$ respectively, similarly each arm $a_j$ updates the confidence bounds for each player.

\noindent $\diamond$ \textit{Preference ranking check}: 
    After computing the confidence bounds, each player and arm performs a check whether they have found the true preference ranking over the other side. From the view of player $p_i$, it requires iterating over all possible permutations of arm preferences until a preference ranking is found which has disjoint confidence intervals i.e. a ranking $\sigma$ such that
$\texttt{LCB}^{(i)}_{\sigma_{k}}> \textup{\texttt{UCB}}^{(i)}_{\sigma_{k+1}} \textup{for any}\ k\in[K-1]$ which implies player $p_i$ truly prefers $a_{\sigma_1}$ over $a_{\sigma_2}$ over $a_{\sigma_3}$ and so on. \\
$\diamond$ \textit{Communication using Blackboard}: If a player or arm finds such disjoint confidence intervals, they set their corresponding bit on the blackboard to 1. Agents then enter the exploitation phase when all the bits are set to 1. Note that this happens synchronously, i.e. every player and arm enters the exploitation phase in the same round. 

\noindent $\circ$ \textit{Exploitation Phase}: Now when every agent learns the true preferences over the other side, they perform a decentralized Deferred Acceptance Algorithm \cite{gale1962college} which is a polynomial time algorithm lasting at most $K^2$ rounds. It finds optimal match for the proposing side and hence a pessimal match for the other side. In our algorithm, players are the proposing side, thus in each round, every player proposes to an arm based on the learned preference ranking till acceptance i.e. if rejected in the last round it proposes to the arm with the next preference rank. Similarly, each arm, based on the learned preference ranking resolves the conflict and chooses the best player. Players (arms) can thus incur non-zero player-optimal (arm-pessimal) expected regret for at most $K^2$ rounds, after which a zero expected regret is guaranteed.  

\paragraph*{Regret Guarantee}  We define $\Delta^{(i)}_{\max}$ as the maximum stable regret suffered by $p_i$ in all rounds which is $\Delta^{(i)}_{\max} = \mu_{i,\Bar{m_p}(p_i)}$.
\begin{theorem}[Regret of Algorithm~\ref{algo:blackboard}]
\label{thm:known}
   Suppose every player  plays Algorithm \ref{algo:blackboard} and every arm plays Algorithm \ref{algo:blackboardarm} for $T$ iterations. Then the player-optimal regret of player $p_i$ satisfies
        \begin{equation*}
            \text{$\overline{RP}_i(T) \leq \left ( N+\frac{64K\log T}{\Delta^2}+K^2+\frac{2NK\pi^2}{3} \right)\Delta^{(i)}_{\max}.$}
        \end{equation*}
    A similar upper bound holds for arm-pessimal regret.
\end{theorem}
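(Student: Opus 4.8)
The plan is to decompose \texttt{ETGS} into its three stages — index estimation, round-robin exploration, and the Gale--Shapley exploitation phase — and to bound, for player $p_i$, the expected number of rounds carrying nonzero player-optimal regret; since $p_i$'s per-round regret never exceeds $\mu^{(i)}_{\bar{m}_p(p_i)}=\Delta^{(i)}_{\max}$, multiplying that count by $\Delta^{(i)}_{\max}$ yields the claimed inequality. Stage~1 costs exactly $N$ rounds and contributes $N\Delta^{(i)}_{\max}$; stage~3 contributes at most $K^2\Delta^{(i)}_{\max}$; the work is in stage~2, whose random length must be controlled, together with a low-probability ``bad'' event that contributes the additive $\tfrac{2NK\pi^2}{3}$ constant.

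First I would establish correctness of index estimation (Algorithm~\ref{algo:index}): since arm $a_1$ holds a fixed strict preference over the $N$ players, it accepts its most preferred still-proposing player each round, so $p_i$ is accepted precisely at the round equal to its rank under $a_1$; hence the phase lasts $N$ rounds and the indices form a permutation of $[N]$. Distinctness of indices (together with $N\le K$) makes the subsequent round-robin collision-free — at exploration step $\tau$, players $p_c,p_{c'}$ would collide only if $c\equiv c'\pmod K$, impossible for distinct $c,c'\in[N]$ — so every proposal in stage~2 is accepted, and over any block of $K$ consecutive exploration rounds each player samples each arm exactly once and, symmetrically, each arm is matched with each player exactly once. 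In particular, after $m$ exploration rounds every $T^{(i)}_k$ (and every arm-side counter) is at least $\lfloor m/K\rfloor$.

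Next I would introduce the clean event $\mathcal{G}$ on which, for every round $t\le T$, every player $p_i$ and arm $a_k$, the estimate $\hat{\mu}^{(i)}_k(t)$ lies within its confidence radius $\sqrt{2\log t/T^{(i)}_k(t)}$ of $\mu^{(i)}_k$, and likewise for every arm's estimates of the $\eta$'s. By sub-Gaussian concentration and a union bound over the at most $2NK$ relevant confidence intervals and over $t$, $\Pr[\mathcal{G}^c]$ is bounded by a sum of the form $\sum_t t^{-2}$, which after pairing with the trivial $T\Delta^{(i)}_{\max}$ bound on regret off $\mathcal{G}$ produces the additive $\tfrac{2NK\pi^2}{3}\Delta^{(i)}_{\max}$ term. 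On $\mathcal{G}$ two facts hold. \emph{(i) Termination:} once every player has sampled every arm more than $32\log T/\Delta^2$ times (and symmetrically for the arms) — which by the round-robin rate happens within $\tfrac{64K\log T}{\Delta^2}$ exploration rounds — every confidence radius is below $\Delta/4\le\Delta^{(i)}/4$, so for each agent the true sorted order $\sigma$ satisfies $\LCB_{\sigma_k}>\UCB_{\sigma_{k+1}}$ for all $k$: indeed $\mu_{\sigma_k}\ge\LCB_{\sigma_k}$ and $\mu_{\sigma_{k+1}}\le\UCB_{\sigma_{k+1}}$ on $\mathcal{G}$, while $\mu_{\sigma_k}-\mu_{\sigma_{k+1}}\ge\Delta^{(i)}$ exceeds twice the sum of the two radii. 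Hence every blackboard bit is set and the exploitation phase begins by then. \emph{(ii) Correctness:} any ranking a player or arm commits to is the true one, because disjoint confidence intervals that contain the true means force the correct order.

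Finally, on $\mathcal{G}$ every agent enters the Gale--Shapley phase with correct preferences, so the decentralized deferred-acceptance run is exactly Gale--Shapley on the true market with players proposing; it terminates within $K^2$ rounds at the player-optimal (equivalently arm-pessimal) stable matching, after which $p_i$ is matched to $\bar{m}_p(p_i)$ and incurs zero expected regret. Summing: on $\mathcal{G}$, nonzero regret is confined to at most $N+\tfrac{64K\log T}{\Delta^2}+K^2$ rounds, each costing $\le\Delta^{(i)}_{\max}$; adding the $\mathcal{G}^c$ contribution gives the stated bound, and the arm-pessimal statement follows identically by the symmetry of the reward model and the fact that players and arms run structurally identical algorithms. \textbf{The main obstacle} is the coupling in stage~2: its length is a stopping time driven jointly by the learning of all $N+K$ agents through the shared blackboard, so unlike vanilla ETC it cannot be fixed in advance; the resolution is precisely to show that on $\mathcal{G}$ the deterministic round-robin sampling forces all agents to detect their true rankings essentially simultaneously, while the rare event that a spurious separation triggers an incorrect commitment (and hence linear regret) is suppressed by the summable $\sum_t t^{-2}$ union bound.
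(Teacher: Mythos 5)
Your proof is correct and follows essentially the same route as the paper's: a clean-event decomposition with the sub-Gaussian union bound giving the $\tfrac{2NK\pi^2}{3}$ term, Lemma-5.3-style confidence-interval separation after $O(K\log T/\Delta^2)$ round-robin rounds giving the main term, plus $N$ rounds of index estimation and $K^2$ rounds of Gale--Shapley. In fact your write-up is slightly more complete than the paper's displayed chain, which omits the $K^2$ Gale--Shapley term from its final line and does not explain the factor of $64$ versus $32$ that you account for via the round-robin sampling rate.
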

\begin{remark}[Different terms] First term in the regret is from index estimation phase, second from a high-probability number of exploration rounds, third from Gale Shapley algrorithm and fourth from the SubGaussian concentration bound.
\end{remark}
\begin{remark}
The regret order-wise matches with the regret in the one-sided learning case \cite{SODA,basu2021beyond}.
\end{remark}
\begin{remark}
    The blackboard setting can be changed by the assumption of global knowledge of matched (player, arm) pair broadcast \cite{liu2021bandit,SODA}. Using this one can detect in a decentralized manner whether all the agents have estimated correct preference ranking similar to the monitoring round of \cite{SODA} for one sided learning. We comment more on this in the Appendix. 
\end{remark}
\subsection{Epoch-based \texttt{CA-ETC}}
We now present the main algorithm, namely \texttt{CA-ETC}  which doesn't require a blackboard and is decentralized and communication-free. Here we describe the player's learning procedure in detail (the arm's learning is in the Appendix). See Fig. \ref{fig:caetc} for pictorial representation of \texttt{CA-ETC}.

\noindent $\circ$ \textit{Main Algorithm:} The proposed Algorithm~\ref{algo:ca_etc}, \texttt{CA-ETC} is epoch based, with each epoch consisting of increasing interaction rounds, in which each individual performs round-robin exploration and commits using learned preferences to the optimal matching Gale-Shapley algorithm. Here we choose exponentially increasing rounds, however, a polynomial algorithm leads to similar regret bounds. As before, to perform round-robin exploration the Index estimation subroutine Algorithm \ref{algo:index} is used by each player to get a distinct index.

\noindent $\diamond$ \textit{Inside an Epoch:} In every epoch $l$, Algorithm~\ref{algo:epoch} performs $2^lT_{\circ}$ exploration rounds and $b^lT_{\circ}$ (with $b=2^{1/\gamma}$ throughout) exploitation rounds. The parameter $\gamma$ controls the exploration-exploitation trade-off in an epoch. The exploration phase is identical to the blackboard based algorithm with reduced computation. Here, each agent computes the confidence bounds and performs preference ranking check for disjoint confidence intervals \textit{only once} after the defined number of exploration rounds for the epoch. If the check for disjoint interval succeeds each agent uses this preference ranking and if it fails the agent uses some arbitrary preference ranking and enter the exploitation phase which is identical to the blackboard based algorithm.

The idea of the algorithm is that when the total samples from exploration rounds exceeds $\lceil\log(\frac{16K\log T}{T_{\circ} \Delta^2}+1)\rceil$ (see \cite{SODA}), with high probability, each agent can estimate the correct preference. In \texttt{CA-ETC}, this happens after a finitely many epochs. Thus after these many epochs, each agent only incurs a non-zero regret in the exploration phase and Gale-Shapley rounds.  
\begin{algorithm}[t!]

\caption{ Epoch-based \texttt{CA-ETC} (view of player $p_i$)}
\label{algo:ca_etc}
\SetKwInOut{Input}{Input}
\SetKwInOut{Output}{Output}
\Input{$T_{\circ}$, parameter $\gamma \in (0,1)$ with $b = 2^{1/\gamma}$}
Run Algorithm \ref{algo:index} for Index Estimation\\
\For{$l=1,2,\ldots$}{
\texttt{Base Algorithm} ($\textup{exploration = }2^lT_{\circ}$, $\textup{horizon = }b^lT_{\circ}$)
}
\end{algorithm}
\begin{algorithm}[t!]

    \caption{ Base Algorithm: ETGS (player $p_i$)}
    \label{algo:epoch}
    \SetKwInOut{Input}{Input}
    \SetKwInOut{Output}{Output}
        \Input{Exploration rounds $2^lT_{\circ}$, Horizon $b^lT_{\circ}$}
        \For{$t = \sum_{l'=1}^{l-1}2^{l'}T_{\circ}+1,\ldots,\sum_{l'=1}^{l-1}2^{l'}T_{\circ}+2^lT_{\circ}$}{
        $A_i(t)=a_{(\textup{Index}+t-1)\%K+1}$ \tcp{Exploration}
        \textup{Observe} $X^{(i)}_{A_i(t)}(t)$ \textup{and update} $\hat{\mu}^{(i)}_{A_i(t)},T^{(i)}_{A_i(t)}$ \textup{if} $\Bar{A}_i(t)=A_i(t)$}
        \textup{Compute \texttt{UCB}}$^{(i)}_{k}$ \textup{and \texttt{LCB}}$^{(i)}_{k}$ \textup{for each} $k\in[K]$\\
    \eIf{$\exists\sigma\  \textup{such that \texttt{LCB}}^{(i)}_{\sigma_{k}}> \textup{\texttt{UCB}}^{(i)}_{\sigma_{k+1}} \textup{for any}\ k\in[K-1]$ }{
    \textup{Preferences = }$\sigma$
    }{\textup{Preferences = arbitrary}}
    \tcp{\textup{Gale-Shapley with $\sigma=(\sigma_{1},\ldots,\sigma_{K})$}}
    Propose using $\sigma$ till acceptance
    \\
    Initialize $s=1$\\
    \For{$t=1,2,\ldots,b^lT_{\circ}-2^lT_{\circ}$}{
    $A_i(t)=a_{\sigma_{s}}$\\
    $s=s+1$ if $\Bar{A}_i(t)==\emptyset$
    }
\end{algorithm}

\paragraph*{Theoretical Guarantees}
\label{sec:theory}
Here, we present the regret bounds for \texttt{CA-ETC} for player $p_i$. Similar regret can be obtained for arm $a_j$ (we defer this to the appendix). 
\begin{theorem}
\label{thm:main}
    Suppose every agent runs \texttt{CA-ETC}  with initial epoch length $T_{\circ}$ and input 
     parameter $\gamma\in(0,1)$. Then, provided the initial epoch satisfies \text{$T_{\circ} \geq \left (\frac{32K\log T}{\Delta^2(T-N)^\gamma}\right)^{\frac{1}{1-\gamma}}$},
    the player-optimal regret for player $p_i$ is given by 
    \begin{align*}
    &\text{$\overline{RP}_i(T) \leq   N\Delta^{(i)}_{\max} + T_{\circ} \left(\frac{32K\log T}{T_{\circ}\Delta^2}\right)^{1/\gamma}\log\left(\frac{64K\log T}{T_{\circ}\Delta^2}\right)\Delta^{(i)}_{\max}$} \\
    &\text{$+ 2 T_{\circ}\left(\frac{T}{T_{\circ}}\right)^{\gamma}\Delta^{(i)}_{\max}+K^2 \gamma\log\left(\frac{T}{T_{\circ}}\right)\Delta^{(i)}_{\max} + \frac{2NK\pi^2}{3}\Delta^{(i)}_{\max}.$}
    \end{align*}
    \end{theorem}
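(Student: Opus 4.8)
The plan is to decompose the per-player regret along the phases of \texttt{CA-ETC} and control each piece on a high-probability ``clean'' event where all sub-Gaussian empirical means concentrate. Concretely, write
$\overline{RP}_i(T) = \overline{RP}_i^{\mathrm{idx}} + \sum_{l\geq 1}\big(\overline{RP}_i^{\mathrm{expl},l} + \overline{RP}_i^{\mathrm{GS},l} + \overline{RP}_i^{\mathrm{exploit},l}\big)$, the four kinds of terms being the regret accrued during index estimation, during the round-robin exploration of epoch $l$, during the (at most $K^2$) Gale--Shapley rounds of epoch $l$, and during the exploitation rounds of epoch $l$. Since each non-stable round costs at most $\Delta^{(i)}_{\max}$, the index term is immediately $\leq N\Delta^{(i)}_{\max}$, which is the first term. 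For the exploration terms, epoch $l$ contains exactly $2^lT_\circ$ exploration rounds, and the number of epochs $L$ reached before the horizon satisfies $\sum_{l=1}^L b^lT_\circ \leq T$, hence $b^L \leq T/T_\circ$ and $L \leq \gamma\log_2(T/T_\circ)$; summing the geometric series $\sum_{l\leq L} 2^lT_\circ \leq 2\cdot 2^L T_\circ = 2(b^L)^\gamma T_\circ \leq 2T_\circ(T/T_\circ)^\gamma$ yields the third term, and bounding each of the $\leq L$ Gale--Shapley blocks by $K^2$ rounds yields the fourth term $K^2\gamma\log_2(T/T_\circ)\Delta^{(i)}_{\max}$.

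The heart of the argument is the exploitation terms. First I would define the clean event $\mathcal{G}$ on which, for every pair $(p_i,a_k)$ and every round $t$, the running empirical mean lies within $\sqrt{2\log t / T^{(i)}_{k}}$ of the true mean (and symmetrically on the arm side); a union bound over the $NK$ relevant pairs and over $t$, using the sub-Gaussian tail, bounds $\mathbb{P}(\mathcal{G}^c)$ by a multiple of $NK\sum_t t^{-2}$, and charging the whole horizon on $\mathcal{G}^c$ produces the last term $\tfrac{2NK\pi^2}{3}\Delta^{(i)}_{\max}$. Next, because the round-robin schedule is collision-free (distinct indices in $\{1,\dots,N\}\subseteq\{1,\dots,K\}$ remain distinct mod $K$, so every exploration pull succeeds), after the exploration of epoch $l$ each relevant pair has been sampled $\Theta(2^lT_\circ/K)$ times on both sides; invoking the \textsc{SODA}-style computation quoted in the text, once this count exceeds $\Theta(\log T/\Delta^2)$ the confidence intervals of any two arms (resp. players) whose true means differ by $\geq\Delta$ become disjoint, so on $\mathcal{G}$ the true preference ranking passes the disjoint-interval check and no incorrect ranking can pass it. Let $l^\ast$ be the smallest epoch for which this holds for all players and arms simultaneously; one verifies $l^\ast \leq \lceil\log_2(\tfrac{16K\log T}{T_\circ\Delta^2}+1)\rceil$, and the hypothesis $T_\circ \geq \big(\tfrac{32K\log T}{\Delta^2(T-N)^\gamma}\big)^{1/(1-\gamma)}$ is precisely what guarantees $l^\ast \leq L$, i.e. the critical epoch is reached within the horizon (otherwise the bound is vacuous).

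For epochs $l > l^\ast$, on $\mathcal{G}$ every agent commits to the correct ranking, the decentralized Gale--Shapley step returns the player-optimal (arm-pessimal) stable matching, and the exploitation rounds contribute zero regret. For epochs $l \leq l^\ast$ I would simply charge the entire exploitation block of epoch $l$, of length at most $b^lT_\circ$, at cost $\Delta^{(i)}_{\max}$ per round (this also absorbs the possibility that the check fails and agents use arbitrary, possibly colliding, rankings in those epochs); since there are at most $l^\ast$ such epochs and each has length at most $b^{l^\ast}T_\circ = (2^{l^\ast})^{1/\gamma}T_\circ$ with $2^{l^\ast} = \Theta(\tfrac{K\log T}{T_\circ\Delta^2})$, this sums to at most $l^\ast\, b^{l^\ast}T_\circ\Delta^{(i)}_{\max}$, which after absorbing constants is the second term $T_\circ\big(\tfrac{32K\log T}{T_\circ\Delta^2}\big)^{1/\gamma}\log\big(\tfrac{64K\log T}{T_\circ\Delta^2}\big)\Delta^{(i)}_{\max}$. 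Collecting the five contributions gives the claimed bound, and the symmetric arm-pessimal statement follows by running the identical argument on the arm side with $\eta$ in place of $\mu$. I expect the main obstacle to be this middle step: pinning down $l^\ast$ and proving that on $\mathcal{G}$ the disjoint-interval test commits to the correct ranking — and never to an incorrect one — for both sides at once, together with the bookkeeping relating the two growth rates $2^l$ (exploration) and $2^{l/\gamma}$ (epoch length) so that the geometric sums collapse to the stated closed forms.
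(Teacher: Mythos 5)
Your proposal is correct and follows essentially the same route as the paper's proof: a clean concentration event contributing the $\tfrac{2NK\pi^2}{3}$ term, a critical epoch $l^\ast$ (the paper's $l_{\max}$) after which the disjoint-interval check commits both sides to the true rankings, full charging of epochs up to $l^\ast$ and only exploration-plus-Gale--Shapley charging afterwards, with the same geometric-sum bookkeeping tying $2^l$, $b^l=2^{l/\gamma}$, $l^\ast$ and $\tilde{l}$ to the stated closed forms. The only cosmetic difference is that you bound the pre-$l^\ast$ exploitation sum directly by $l^\ast b^{l^\ast}T_\circ$, whereas the paper isolates $b^{l_{\max}+1}-2^{l_{\max}+1}$ and applies a convexity inequality; both give the same logarithmic factor in the second term.
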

\begin{remark}
    \texttt{CA-ETC} takes $\gamma$ and $T_{\circ}$ as input parameters to be chosen by the learner. Typical values of $\gamma$ would be $\{1/3,1/4,1/5\}$, which would imply a polynomial dependence on $\log T$ and a weakly increasing 
    function of $T$. In particular, for $\gamma=1/4$ we have
    \begin{equation*}
       \text{$\overline{RP}_i(T)$}\leq  \text{$\mathcal{O}\left(\left(\frac{K\log T}{T_{\circ}\Delta^2}\right)^{4}\log\left(\frac{K\log T}{T_{\circ}\Delta^2}\right)+T_{\circ}\left(\frac{T}{T_{\circ}}\right)^{1/4}\right)$}
    \end{equation*}
\end{remark}
In comparison to one-sided learning papers \cite{ucbd3,liu2021bandit,SODA}, the regret incurred by \texttt{CA-ETC} is high. This can be attributed to the cost of two-sided assumption-free learning. The dependence on $\gamma$ comes from the multi-phase nature of \texttt{CA-ETC}. On top of a worse dependence on $\log T$ and $\Delta$, \texttt{CA-ETC} also has a (weakly growing) polynomial dependence on $T$. We believe with algorithms that allow structured collision and some weak form of communication, these high regret term can be cut down.
\begin{remark}[Type of \texttt{CA-ETC}] We emphasize that here we use exponential increment, however polynomial increment i.e. $l^2T_{\circ}$ exploration and $l^bT_{\circ}$ total rounds can be used. We observe a similar regret upper bound, explicit expression can be found in the appendix.  
    
\end{remark}
\begin{remark}[Different terms]
First term in the regret comes from the index estimation subroutine. The second term results from the round-robin exploration before every agent estimates the true preference ranking. The third and fourth term comes from the round-robin exploration and Gale-Shapley after the rank estimation. The last term results from the SubGaussian concentration bound. 
\end{remark}
\begin{remark}[Choice of $T_{\circ}$]
     We have a condition that needs $T_{\circ}$ to be not too small. However, note that, unless the gap $\Delta$ is too small, the condition is rather-mild and is trivial when $T$ is very large. Of course, the optimal choice of $T_{\circ}$ depends on the gap, $\Delta$ and hence not known to the learner apriori.
\end{remark}
\section{Simulations}
\begin{figure}[!t]
    \centering
    \includegraphics[width=0.7\linewidth]{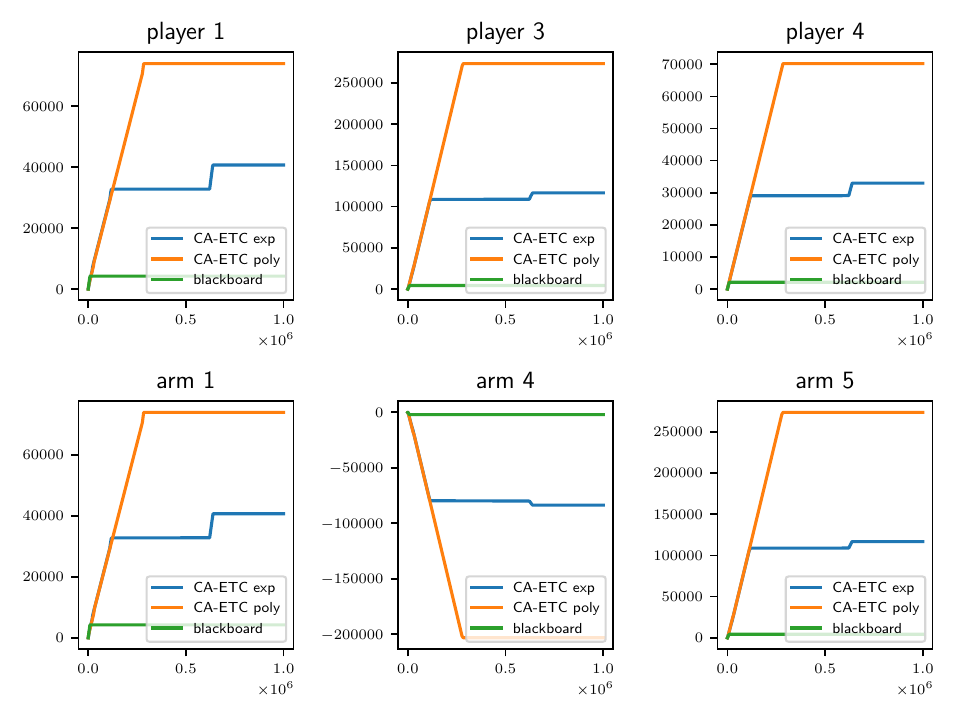}
    \caption{5x5 market cumulative regret plot}
    \label{fig:all}
\end{figure}

We consider a market setup with 5 players and 5 arms. The mean is sampled uniformly between $(0,1)$ without replacement. We plot cumulative regret v/s horizon in Fig. \ref{fig:all} for a single run for $10^6$ interaction rounds. For \texttt{CA-ETC} exp and poly we consider $2^lT_{\circ}$ ($b^lT_{\circ}$) and $l^2T_{\circ}$ ($l^bT_{\circ}$) exploration (total rounds) respectively. we plot player-optimal and arm-pessimal regret for some players and arms. Arm-pessimal regret can be negative as arm can match with better players during exploration round, we observe this effect for arm 4. We keep $\gamma=0.4$ and $T_{\circ}=500$. Our results indicate that \texttt{CA-ETC} learns the \textit{true} preferences after finitely many epochs which is $l_{\max}=3$ for both \texttt{CA-ETC} exp and poly. We observe that the type of \texttt{CA-ETC} has an impact on the performance due to different sensitivity to parameter $\gamma$. In practice, we propose to use sufficiently larger $T_{\circ}$ for a good exploration warm-up and values of $\gamma$ stated in Remark 4. 
\vspace{-0.13cm}

\section{Discussion and Future Work}
In this paper, we propose practical algorithms for two-sided learning in matching markets without restrictive assumptions, one with communication (blackboard) and other completely decentralized \texttt{CA-ETC}. Without any communication (implicit or explicit), and prior knowledge of $\Delta$, logarithmic regret in two-sided learning seems unachievable. Prior work \cite{ucbd3} uses implicit communication using structured collisions in one-sided learning, however, it is not beneficial when the arms cannot resolve conflicts correctly. This is due to the inherent asymmetry in the learning of two-sided matching, the non-proposing side (arm) need to learn the preferences before proposing side (players) for this communication to be useful. Our algorithm \texttt{CA-ETC} is symmetrical and synchronous i.e. both sides learn in similar fashion and enter commit phase at the same time. One future direction is to make the algorithm asynchronous, which can be done when a subset of the market preserves the stable match. Further, Algorithm~\ref{algo:epoch} in Line 9 uses arbitrary preferences, which an agent based on prior knowledge and estimated confidence bounds can use a ranking close to \textit{true} preference ranking and can thus lead to sharper regret bounds. In future, we would also like to study the market
setup, with transferable utilities (i.e., monetary transfer) in a
two-sided setting. Furthermore, markets are seldom static, and the preference ranking changes over time and capturing the
dynamic behavior of markets in an assumption free two-sided setting is certainly challenging. 

\IEEEtriggeratref{4}

\bibliographystyle{IEEEtran}
\bibliography{ref}









\end{document}